\title{AAAI Press Formatting Instructions \\for Authors Using \LaTeX{} --- A Guide }
\newtheorem{theorem}{Theorem}
\newtheorem{definition}[theorem]{Definition}
\newtheorem{lemma}[theorem]{Lemma}
\newtheorem{example}{Example}[section]
\newcommand{\MLIC}{\ensuremath{\mathsf{MLIC}}}
\newcommand{\IMLI}{\ensuremath{\mathsf{IMLI}}}
\newcommand{\R}{\mathcal{R}}
\newcommand{\X}{\mathbf{X}}
\newcommand{\MaxSAT}{\ensuremath{\mathsf{MaxSAT}}}
\newcommand{\weight}[1]{\ensuremath{W \! \left( #1 \right)}}
\newcommand{\Rule}{\ensuremath{{\mathcal{R}}}}
\newtheorem{construction}[theorem]{Construction}
\title{IMLI: An Incremental Framework for  MaxSAT-Based Learning of  Interpretable  Classification Rules\thanks{The paper is published in the proceedings of  AAAI/ACM Conference on AI, Ethics, and Society (AIES 2019).}}
\author{Bishwamittra Ghosh \and Kuldeep S. Meel\\
National University of Singapore}
\begin{document}

\maketitle

\begin{abstract}

The wide adoption of machine learning in the critical domains such as medical diagnosis, law, education had propelled the need for interpretable techniques due to the need for end users to understand the reasoning behind decisions due to learning systems. The computational intractability of interpretable learning led practitioners to design heuristic techniques, which fail to provide sound handles to tradeoff accuracy and interpretability. 

Motivated by the success of MaxSAT solvers over the past decade, recently MaxSAT-based approach, called {\MLIC}, was proposed that seeks to reduce the problem of learning interpretable rules expressed in Conjunctive Normal Form (CNF) to a MaxSAT query. While {\MLIC} was shown to achieve accuracy similar to that of other state of the art {\em black-box} classifiers while generating small interpretable CNF formulas, the runtime performance of {\MLIC} is significantly lagging and renders approach unusable in practice. In this context, authors raised the question: {\em Is it possible to achieve the best of both worlds, i.e., a sound framework for interpretable learning that can take advantage of MaxSAT solvers while scaling to real-world instances?} 

In this paper, we take a step towards answering the above question in affirmation. We propose {\IMLI}: an incremental approach to MaxSAT based framework that achieves scalable runtime performance via partition-based training methodology. Extensive experiments on benchmarks arising from UCI repository demonstrate that {\IMLI} achieves up to three orders of magnitude runtime improvement without loss of accuracy and interpretability.  

\end{abstract}
\section{Introduction}

The recent advances in the machine learning techniques have led autonomous decision making systems be adopted in wide range of domains to perform data-driven decision making. As such the domains range from movie recommendations, ad predictions to legal, medical, and judicial. The diversity of domains mandate different criteria for the machine learning techniques. For domains such as movie recommendations and ad predictions, accuracy is usually the primary objective but for safety critical domains \cite{otte2013safe} such as medical and legal, interpretability, privacy, and fairness \cite{barocas2017fairness} are of paramount importance.

It has been long observed that the interpretable techniques are typically trusted and adopted by decision makers as interpretability provides them understanding of reasoning behind a tool's decision making \cite{ribeiro2016should}. At this point, it is important to acknowledge that formalizing interpretability is a major challenge \cite{doshi2017towards} and we do not claim to have final word on this. In this context, it is worth noting that for several domains such as medical domain, which was the motivation for our investigation, decision rules with small number of rules tend to be most interpretable \cite{letham2015interpretable}. 

Since the problem of rule learning is known to be in NP-hard, the earliest efforts focused on heuristic approaches that sought to combine heuristically chosen optimization functions with greedy algorithmic techniques. Recently, there has been surge of effort to achieve balance between accuracy and rule size via principled objective functions and usage of combinatorial optimization techniques such as  linear programming (LP) relaxations, sub-modular optimization, or Bayesian methods \cite{BertsimasCR2012,marchand2002SetCoverMachine,malioutov2013exact,boros2000-LogicalAnalysisOfData,wang2015or}
\footnote{An extensive survey of related work is presented in Appendix ~\ref{sec:related}.}.
 Motivated by the success of MaxSAT solving over the past decade, Malioutov and Meel proposed a MaxSAT-based approach, called {\MLIC} \cite{MM18}, that provides a precise control of accuracy vs. interpretability. The said approach was shown to provide interpretable Boolean formulas without significant loss of accuracy compared to the state of the art classifiers.  {\MLIC}, however, has poor scalability in terms of training time and times out for most instances beyond hundreds of samples. In this context, we ask: {\em Can we design a MaxSAT-based framework to efficiently construct interpretable rules without loss of accuracy and scaling to large real-world instances?}

The primary contribution of this paper is an affirmative answer to the above question. We first investigate the reason for poor scalability of {\MLIC} and attribute it to large size (i.e., number of clauses) of MaxSAT queries constructed by {\MLIC}. In particular, for training data of $n$ samples over $m$ boolean features, {\MLIC} constructs a formula of size $\mathcal{O}(n\cdot m \cdot k)$ to construct a $k-$clause Boolean formula. We empirically observe that the performance of MaxSAT solvers has worse than quadratic degradation in runtime with increase in the size of query. This leads us to propose a novel incremental framework, called {\IMLI}, for learning interpretable rules using MaxSAT. In contrast to {\MLIC}, {\IMLI} makes $p$ queries to MaxSAT solvers with each query of the size $\mathcal{O}(\frac{n}{p}\cdot m \cdot k)$. {\IMLI} relies on first partitioning the data into $p$ partitions and then incrementally learning rules on the $p$ partitions in a linear order such that rule learned for the $i$-th partition not only uses the current partition but regularizes itself with respect to the rules learned from the first $i-1$ partitions. We conduct a comprehensive experimental study over the large set of benchmarks and show that {\IMLI} significantly improves upon the runtime performance of {\MLIC} by achieving speedup of up to three orders of magnitude. Furthermore, the rules learned by {\IMLI} are significantly small and easy to interpret compared to that of the state of the art classifiers such as RIPPER and {\MLIC}.

Similar to Malioutov and Meel~\shortcite{MM18}, we hope that {\IMLI} will excite researchers in machine learning and CP/SAT (Constraint Programming/Satisfiability)  communities to consider this topic further: in designing new MaxSAT-based formulations and in turn designing the MaxSAT solvers tuned for interpretable machine learning.

\section{Preliminaries}
\label{sec:preliminaries}

 We use capital  boldface letters such as $\mathbf{X}$ to denote matrices while lower boldface letters $\mathbf{y}$ are reserved for vectors/sets. For a matrix $\mathbf{X}$, $\mathbf{X}_i$  represents the $ i $-th row of $\mathbf{X}$ while for a vector/set $\mathbf{y}$, $y_i$ represents the $ i $-th element of $\mathbf{y}$. 

Let $F$ be a Boolean formula and  $\mathbf{b} = \{b_1,b_2,\dots ,b_m \}$ be the set of variables appearing in $F$. A literal is a variable ($b_i$) or its complement($\neg b_i$).  A \emph{satisfying assignment} or a
\emph{witness} of $F$ is an assignment of variables in $\mathbf{b}$ that makes
$F$ evaluate to \emph{true}.  If $\sigma$ is an assignment of variables and $b_i \in \mathbf{b}$, we use $\sigma(b_i)$ to denote the value assigned to
$b_i$ in $\sigma$. $F$ is in Conjunctive Normal Form (CNF) if $F := C_1 \wedge C_2 \dots \wedge C_k$, where each clause $C_i$ is represented as disjunction of literals. We use $|C_i|$ to denote the number of literals in $C_i$.   For two vectors $\mathbf{u}$ and $\mathbf{v}$ over propositional variables or constants ($ 0 $, $ 1 $, $ true $, $ false $ etc.), we define $\mathbf{u} \vee \mathbf{v} = \bigvee_{i} (u_{i} \wedge v_{i})$, where $u_{i}$ and $v_{i}$ denote a variable/constant  at the $i$-th index of $\mathbf{u}$ and $\mathbf{v}$ respectively. In this context, note that the operation $\wedge$ between a variable and a constant follows the standard interpretation, i.e., $0 \wedge b = 0$ and $1 \wedge b = b$.

We consider a standard binary classification, where we are given a collection of 
training samples $\{ \mathbf{X}_i, y_i \}$ where each vector $\mathbf{X}_i \in \mathcal{X}$ contains the valuation of the features $\mathbf{x} = \{x_1, x_2, \dots, x_m\}$ for sample $i$, and $y_i \in \{0,1\}$ is the binary label for sample $i$.
A classifier {\Rule} is a mapping that takes in a feature vector $\mathbf{x}$ and return a class $y$, i.e., $y = \Rule(\mathbf{x})$. The goal is not 
only to design $\Rule$ to approximate our training set, but also to generalize to unseen samples arising from the same distribution. We define two rules  $ \R_1 $ and $ \R_2 $ to be  equivalent  if $ \forall i,\R_1(\X_i)=\R_2(\X_i) $. 
In this work, we restrict $\mathbf{x}$ and $y$ to be Boolean (we discuss in Sect.~\ref{sec:nonbinaryfeatures} that such a restriction can be achieved without loss of generality) and focus on classifiers that can be expressed compactly in CNF. {We use $clause(\Rule,i)$ to denote the $i$-th clause of $\Rule$}. {Furthermore, we use $|\Rule|$ to denote the rule-size of classifier $ \Rule $ that is the sum of the count of literals in all the clauses, i.e., $|\Rule| = \Sigma_i |clause(\Rule,i)|$}.


In this work, we focus on the weighted variant of CNF wherein a weight function is defined over clauses. For a clause $C_i$ and 
weight function {\weight{\cdot}}, we use {\weight{C_i}} to denote the weight of clause $C_i$. We say that a clause $C_i$ is hard if $\weight{C_i} = \infty$, otherwise $C_i$ is called a soft clause.  To avoid notational clutter, we overload {\weight{\cdot}} to denote the weight of an assignment or clause, depending on the context. We define weight of an assignment $\sigma$ as the sum of weight of clauses that $\sigma$ does not satisfy. Formally, $\weight{\sigma} = \Sigma_{i | \sigma \not\models C_i} \weight{C_i}$.

Given $F$ and weight function $\weight{\cdot}$, the problem of {MaxSAT} is to find an assignment $\sigma^*$ that has the minimum weight, i.e.,  $\sigma^* = \MaxSAT(F,W)$ if $\forall \sigma \neq \sigma^*, \weight{\sigma*} \leq \weight{\sigma}$. Our formulation will have positive clause weights, hence MaxSAT corresponds to satisfying as many clauses as possible, and picking the strongest clauses among the unsatisfied ones. 
Borrowing terminology of community focused on developing {MaxSAT} solvers, we are solving a partial weighted {MaxSAT} instance wherein we mark all the clauses with $\infty$ weight as hard and clauses with other positive value less than $ \infty $ weight as soft  and ask for a solution that optimizes the partial weighted {MaxSAT} formula.  The knowledge of inner working of {MaxSAT} solvers and encoding of our representation into weighted {MaxSAT} is not required for this paper.

\section{Problem Formulation}
\label{sec:problem}

Given a training set $ \{\mathbf{X},\mathbf{y}\} $, our goal is to find an interpretable rule that is as accurate as possible. As noted earlier, there are several notions of interpretability. We follow the notion employed in Malioutov and Meel~\shortcite{MM18}, which focuses on the construction of rules involving few clauses each with few literals~\footnote{An advantage of Malioutov and Meel's formulation is a formal  notion of interpretability, which is amenable to formal analysis. We do not wish to claim that Malioutov and Meel's notion is the only formal definition of interpretability.}.

In particular, suppose 
    $ \mathcal{R} $ classifies all samples correctly, i.e., $ \forall i, y_i=\mathcal{R}(\mathbf{X}_i) $. Among all the rules that classify all samples correctly, we choose $ \mathcal{R} $ which is the sparsest (most interpretable) one. 
\[
\min\limits_{\mathcal{R}} |\mathcal{R}|\text{ such that }\forall i, y_i=\mathcal{R}(\mathbf{X}_i)
\]

 
 A classifier rule, however, can not classify all samples correctly. Hence we choose a classifier that makes less prediction error. $ \mathcal{E}_\mathcal{R} $   is the set of samples which are misclassified  by $ \mathcal{R} $, i.e., $\mathcal{E}_\mathcal{R}=\{\mathbf{X}_i | y_i \ne \mathcal{R}(\mathbf{X}_i) \}$. Hence we aim to find $ \mathcal{R} $ as follows. 
\[
\min\limits_{\mathcal{R}} |\mathcal{R}| + \lambda|\mathcal{E}_\mathcal{R}|  \text{ such that }\forall \mathbf{X}_i \notin \mathcal{E}_\mathcal{R} ,y_i = \mathcal{R}(\mathbf{X}_i) 
\]

$ \lambda $ is the data fidelity  parameter balancing the trade-off between classifier complexity  and prediction accuracy.  
Higher value of $ \lambda $ guarantees less prediction error while sacrificing the  sparsity of $ \mathcal{R} $ by adding more literals in $ \mathcal{R} $, and vice versa. Therefore $ \lambda $ is  an inverse of regularization. 
\section{ {\IMLI}: MaxSAT-Based Incremental Learning Framework of Interpretable Rules}\label{sec:imli} 

In this section, we present the primary contribution of this paper, {{\IMLI}}, which is a MaxSAT-based incremental learning framework for interpretable classification rules. The core technical idea behind {{\IMLI}} is to divide the training data into a fixed number $ p $ of partitions {and employ {\MaxSAT} based learning framework for each partition } such that the MaxSAT query constructed for partition $i$ is based on the training data for partition $i$ and the rule learned until partition $i-1$.
 To this end, we use the notation $(\mathbf{X}^i, \mathbf{y}^i) $ to refer to the training data for the $i$-th partition. We assume that  $\forall i, |\mathbf{X}^i| = |\mathbf{X}^{i-1}|$.

The rest of the section is organized as follows: we first describe the construction of {MaxSAT} query for the  $i$-th partition  in Sect.~\ref{subsec:query} to learn CNF rules, and then discuss the discretization techniques for real-world datasets in Sect.~\ref{sec:nonbinaryfeatures}. The incrementality of {{\IMLI}} gives rise to the challenge of having redundant literals in the learned rules; we address such redundancy in Sect.~\ref{sec:pruning} and finally we discuss, in Sect.~\ref{sec:dnf}, how our framework for learning CNF rules can be easily extended to learn DNF rules as well.  
We provide an illustration of rule learning of {\IMLI} in Appendix ~\ref{sec:example}.


	\subsection{Construction of MaxSAT Query}\label{subsec:query}

  We now discuss the construction of a  {MaxSAT} query, denoted by $Q_i$, for the  $i$-th partition ($i \in [1,p]$). To construct the {MaxSAT} query for the $i$-th partition, we assume an access to the rule learned from the  $(i-1)$-th partition (where $\R_0$ is  an empty formula). 
  
 The construction of $Q_i$ takes in four parameters: (i) $ k $, the desired number of clauses in CNF rule, (ii) $ \lambda $, the data fidelity parameter, (iii) a matrix ${\mathbf{X}^{i}} \in \{0,1\}^{n \times m}$  describing the  binary value of  $m$ features for each of    $n$ samples with $\mathbf{X}^{i}_q$ being a binary valued vector for the $ q $-th sample corresponding to feature vector $\mathbf{x} = \{x_1, x_2, \dots, x_m\}$, (iv) a label vector $\mathbf{y}^{i} \in \{0,1\}^n$ containing a class label $y^{i}_q$ for the  sample $\X^{i}_q$.   Consequently, {{\IMLI}} constructs a {MaxSAT} query for the $ i $-th partition and invokes an off-the-shelf MaxSAT solver to compute the underlying rule $ \R_i $.

	{\IMLI} considers two types of propositional variables: (i) feature variables and (ii) noise (classification error) variables.
 	For the $ i $-th partition, {\IMLI} formulates a classifier rule $ \R_i $ based on following intuition.  Recall, a $ k $-clause CNF rule $ \R_i=\bigwedge_{l=1}^kC_l $ is  represented as the conjunction of  $ k $ clauses where clause $ C_l $ is the disjunction of  feature variables. A  sample $ \X^i_q $ satisfies  $ C_l $ if $ \X^i_q $ has at least one similar feature whose representative variable is present in $ C_l $. If $ \X^i_q $ satisfies  $ \forall l, C_l$, then  $ \R_i(\X^i_q)=1 $ otherwise $ \R_i(\X^i_q)=0 $. Since  feature $ x_j $ can be present or not present  in each of   $ k $ clauses, {\IMLI} considers $ k $  boolean variables, each denoted by $ b_j^l $ ($ l \in [1,k] $) for  feature $ x_j $ to denote its participation in the $ l $-th clause. A sample $ \X^i_q $, however, can be misclassified by $ \R_i $ i.e., $ \R_i(\X^i_q)\oplus  y_q^i =1 $.  {\IMLI}  introduces  a noise variable $ \eta_q $ corresponding to  sample $ \X^i_q $ so that the assignment of $ \eta_q $ can be interpreted whether $ \X^i_q $ is misclassified by $ \R_i $ or not.  Hence the key idea of {{\IMLI}} for learning the $ i $-th partition is to define a {MaxSAT} query over $k \times m + n$ propositional variables, denoted by $\{ b_{1}^{1}, b_{2}^{1}, \dots ,b_{m}^{1}, \dots ,b_{m}^{k}, \eta_1, \dots ,\eta_n\}$.   The {\MaxSAT} query of {\IMLI} consists of the following three sets of constraints:

	
%
%

%
	\begin{enumerate}
		
		\item Since our objective is to find sparser rules, the default objective of  {\IMLI} would be to add a constraint to falsify as many $ b_j^l $  as possible. 	As noted earlier,  rule $\Rule_{i-1}$ from the $(i-1)$-th partition plays an important role in the construction of MaxSAT 
		constraints of the $i$-th partition. Therefore,   if $x_{j} \in clause(R_{i-1},l)$, {\IMLI} would deviate from its default behavior by adding a constraint to keep the corresponding literal $ true $ in the optimal assignment.  The weight corresponding to this clause is $ 1 $. We formalize our discussion as follows:
	\begin{equation*}
	V^l_j:= 
	\begin{cases}
	b^l_j&\quad \text{if }  x_{j} \in clause(\R_{i-1},l)\\
	\neg b^l_j&\quad \text{otherwise } \\
	\end{cases}
	;\quad  W(V^l_j)= 1
	\label{eq:constraint-v}
	\end{equation*}
		\item We use noise variables to handle mis-classifications and therefore, {\IMLI} tries to falsify as many   noise variables as possible. Since data fidelity parameter $ \lambda $ is proportionate to accuracy, {\IMLI} puts $ \lambda $ weight to each following soft  clause. 
		\begin{equation*}
		N_q:= (\neg \eta_q ); \qquad \qquad W(N_q) = \lambda
		\label{eq:constraint_two}
		\end{equation*}
		
		\item 
		Let $\mathbf{B}_{l} = \{b^{l}_{j} \mid j \in [1,m] \}$. 
		Here we provide the third set of constraints of {\IMLI}. 
		\begin{equation*}
		D_q:= (\neg \eta_q \rightarrow ( y_q \leftrightarrow \bigwedge_{l=1}^{k} ({\mathbf{X}_q} \vee {\mathbf{B}_{l}})));    W(D_q) = \infty
		\label{eq:constraint_three}	
		\end{equation*}
		Every hard clause $D_q$  can be interpreted as follows. If $\eta_q$ is assigned to \textit{false} ($ \neg \eta_q = true$) then ${y}^i_q = \R_i(\X^i_q)=\bigwedge_{l=1}^{k} ({\mathbf{X}_q} \vee {\mathbf{B}_{l}})$.  The operator ``$\vee  $'' is defined in Sect. \ref{sec:preliminaries}.  
		

	\end{enumerate}
	Finally, the set of constraints $Q_i$ for  the $ i $-th partition constructed by {{\IMLI}} is defined as follows:
	\begin{equation*}
	\label{eq:constraint-imli}
	Q_i :=  \bigwedge_{j=1, l = 1}^{j=m,l=k} V^l_j \wedge \bigwedge_{q=1}^{n} N_q \wedge \bigwedge_{q=1}^{n} D_q
	\end{equation*} 
	
	Next, we extract $\Rule_i$ from the solution of $Q_i$ as follows. 
	\begin{construction}
		Let $\sigma^* = \MaxSAT(Q_i,W)$, then $x_j \in {clause(\Rule_i,l)}$ iff $\sigma^*(b_{j}^{l}) = 1$.
	\end{construction}
	In the rest of the manuscript, we will use $\Rule$ to denote $\Rule_{p}$.  
	\subsection{Beyond Binary Features}\label{sec:nonbinaryfeatures}

	We have considered that the feature value of a training sample is binary. Real-world  datasets, however,  contain categorical, real-valued or numerical features. We use the standard discretization technique to convert categorical and continuous (real or integer value) features to boolean features. We use  {\em one hot encoding} to convert categorical features to binary features by introducing a boolean vector  with the cardinality equal to the number of distinct categories of individual categorical features.
	\begin{example}
		\label{ex:bin_categorical}
		Consider a categorical feature with three categories: ``red'', ``green'', ``'yellow''. One hot encoding would convert this feature to three binary variables, which take values $ 100 $, $ 010 $, and $ 001 $ for the three categories.
	\end{example}
	
	 Furthermore, we can discretize the continuous-valued features into  binary features by comparing the feature value to  a collection of  thresholds within range and introducing  a boolean feature vector with cardinality proportional to the number of considered thresholds \cite{MM18}.
Specifically, for a continuous feature $ x_c $ we consider a number of thresholds $ \{\tau_1,\dots,\tau_t\}$ where $  \tau_i < \tau_{i+1}  $ and define two separate Boolean features $ I[x_c \ge \tau_i] $ and $ I[x_c < \tau_i] $ for each $ \tau_i $. 
	We present the following definitions based on the discretization of  continuous features.
	
		 	\begin{definition}
		 		$ tval(b): b \rightarrow \tau $ is a function over boolean variables corresponding to discretized binary features (from a continuous feature) and outputs the compared threshold value. 
		 	\end{definition}
		 	
		 	\begin{definition}
		 		$ op(b): b \rightarrow \{\ge , <\} $ is a function over boolean variables corresponding to discretized binary features (from a continuous feature) and outputs the comparison operator between continuous feature value and $ tval(b) $. 
		 	\end{definition}
		 	
		 	\begin{definition}
		 		$ siblings(b_i, b_j): (b_i, b_j) \rightarrow \{true, false\} $ is a function over pair of  boolean variables $ b_i,b_j $ and outputs $ true $ if the boolean features corresponding to $ b_i,b_j $ are constructed by discretizing the same continuous feature  
		 		and  
		 		$ op(b_i)=op(b_j) $.
		 	\end{definition}
	 	
	 	\begin{example}
	 		\label{ex:bin-continuous}
	 		Consider  a  continuous feature $ x_c $ with range $  (0,100) $ and three thresholds $ \{25,50,75\} $ associated with this feature.  $ \IMLI $ introduces $ 6 $ new boolean features   $ \{ x_1: I[x_c\ge 25],x_2: I[x_c\ge 50], x_3:I[x_c\ge 75], x_4:I[x_c< 25], x_5:I[x_c< 50],x_6:I[x_c < 75]\} $. Following this discretization technique,  the binary feature vector  of a sample  with   feature value $ x_c=37.5 $ is  $ 100011 $, because among the $6  $ introduced boolean features $ x_1:I[ 37.5 \ge 25 ]=1$, $ x_5:I[37.5 < 50]=1 $, and $x_6:I[ 37.5 < 75]=1 $.

	 	\end{example}
	 	
	 	\begin{example}
	 		\label{ex:prune}
	 		In Example \ref{ex:bin-continuous}, $ b_i $ is a boolean variable corresponding to feature $ x_i $. Now $ tval(b_1)=25$, $ op(b_1)=\text{``}\ge\text{''}$, $ siblings(b_1,b_2)=true, $ and $ siblings(b_1,b_4)=false $. 
	 	\end{example}

		

	\subsection{Redundancy Removal}\label{sec:pruning}
	Given the incremental procedure of learning {\Rule} where the constraints for the $i$-th partition are influenced from the rule learned until the $(i-1)$-th partition, one key challenge is to address potential redundancy in the learned rules. In particular, we observe that redundancy manifests itself in binary features corresponding to continuous-valued features as the $(i-1)$-th partition  might suggest inclusion of feature  $ I[x_c < \tau_u] $ while the $ i $-th partition  also suggests inclusion of feature $ I[x_c < \tau_v] $ where $\tau_u \ne \tau_v$. To this end, we present Algorithm \ref{alg:imli-RRL} to remove redundant literals. 

	\begin{algorithm}
		\caption{Remove Redundancy}\label{alg:imli-RRL}
		\begin{algorithmic}[1]
			\Procedure{removeRedundantLiterals}{$\R$}
			\For{each  clause $ C_l $ of $ \R $} 
			\For{ each pair $ \langle b_i^l,b_j^l \rangle $  where   $ \sigma(b_i^l)=\sigma(b_j^l)=1$,   $siblings(b_i^l,b_j^l)=true $, and $ tval(b_i^l) < tval(b_j^l) $ }
			\If {$op(b_i^l)=op(b_j^l)=\text{``}\ge \text{''} $}
			\State  $ \R'=\R[\sigma(b_j^l)\mapsto0] $ \Comment{$ b_j^l $ is redundant}
			\Else \State $ \R'=\R[\sigma(b_i^l)\mapsto0] $
			\EndIf
			\EndFor
			\EndFor
			\State \Return $ \R' $
			\EndProcedure
		\end{algorithmic}
	\end{algorithm}

	\begin{lemma}
		 $ |\R'| \le |\R| $ and $ \R' $ is equivalent to $ \R $. 
		
		\label{lm:adjustment_of_rule_size}
	\end{lemma}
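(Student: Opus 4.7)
The plan is to argue clause by clause: since $\R$ is a conjunction of clauses and the algorithm only modifies literals inside a single clause at a time, it suffices to show that each individual deletion shrinks the clause by one literal while preserving the clause's semantics on every sample. Summing over clauses then yields $|\R'| \le |\R|$, and the conjunction of semantically equivalent clauses is semantically equivalent, so $\forall i, \R'(\X_i) = \R(\X_i)$.

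First I would unpack what the sibling condition guarantees. If $\mathrm{siblings}(b_i^l, b_j^l) = true$, then by definition $b_i^l$ and $b_j^l$ are binary features obtained from discretizing the same continuous feature $x_c$ with the same comparison operator. So either both literals denote $I[x_c \ge \tau_u]$ and $I[x_c \ge \tau_v]$, or both denote $I[x_c < \tau_u]$ and $I[x_c < \tau_v]$, where $\tau_u = tval(b_i^l) < tval(b_j^l) = \tau_v$. In the first case, on any concrete sample the value of $x_c$ satisfies $I[x_c \ge \tau_v] \Rightarrow I[x_c \ge \tau_u]$, so inside the disjunctive clause $C_l$ we have $I[x_c \ge \tau_u] \vee I[x_c \ge \tau_v] \equiv I[x_c \ge \tau_u]$, i.e.\ the literal $b_j^l$ can be dropped without changing the truth value of $C_l$. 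In the second case the implication reverses: $I[x_c < \tau_u] \Rightarrow I[x_c < \tau_v]$, so $b_i^l$ is the redundant one. These are exactly the two branches of the \textbf{if} in Algorithm~\ref{alg:imli-RRL}, which confirms that each execution of the body replaces $C_l$ by a clause with one fewer literal and the same truth function on every input sample.

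Next I would lift this local argument to the whole rule. Since $\R = \bigwedge_l C_l$ and $\R' = \bigwedge_l C_l'$ where each $C_l'$ is obtained from $C_l$ by repeated applications of the above deletion, an easy induction on the number of deletions inside a single clause gives $C_l \equiv C_l'$ pointwise on every $\X_i$ and $|C_l'| \le |C_l|$. Taking the conjunction over all clauses yields both $\R(\X_i) = \R'(\X_i)$ for every $i$ (hence equivalence in the sense defined in Section~\ref{sec:preliminaries}) and $|\R'| = \sum_l |C_l'| \le \sum_l |C_l| = |\R|$.

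The only mildly delicate point, and the one I would state explicitly, is that successive deletions inside the same clause do not interact badly: each deletion strictly reduces the set of literals, so the loop is well defined and terminates, and because every individual deletion is semantics-preserving the composition is as well, independent of the order in which the pairs $\langle b_i^l, b_j^l\rangle$ are visited. There is no need to worry about sibling pairs spanning different clauses, since the algorithm only considers pairs with a common superscript $l$.
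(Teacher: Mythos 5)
Your proof is correct and follows essentially the same route as the paper: the paper phrases the key step as the set identity $S_i \cup S_j = S_i$ for the two half-lines $\{x : x \ge tval(b_i^l)\}$ and $\{x : x \ge tval(b_j^l)\}$ inside a disjunctive clause, which is exactly your observation that $I[x_c \ge \tau_v] \Rightarrow I[x_c \ge \tau_u]$ makes the larger-threshold literal absorbable (and symmetrically for ``$<$''). Your additional care about lifting single deletions to the whole rule and about the order-independence of successive deletions is a minor tightening of what the paper leaves implicit, not a different argument.
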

\begin{proof}
	See Appendix~\ref{prf:prune}. 
\end{proof}

	\subsection{Learning DNF Rules}
	\label{sec:dnf}

			Primarily we focus on learning rule $ \R $ which is in CNF form. We can also apply incremental technique for learning DNF rules.   Suppose, we want to learn a rule $ y=S(x) $ where $ S(x) $ is expressible in DNF. We show that $ y = S(x) \leftrightarrow \neg (y=\neg S(x)) $. Here $ \neg S(x) $ is in CNF.  Therefore, to learn DNF rule $ S(x) $, we simply call {\IMLI} with $ \neg \mathbf{y} $ as input  for all $ p $ batches, learn CNF rule, and finally
	negate the learned rule.  Hence Algorithm \ref{alg:imli-RRL} can be directly applied.
	\begin{example}
		\label{ex:dnf_rule}
		``$( \text{is Male}\vee \text{Age}<\text{50})\wedge(\text{Education}=\text{Graduate}\vee\text{Income}\ge\text{1500}) $''\textemdash rule is  learned for negated class label. The resultant DNF rule is  ``$ (\text{is not Male}\wedge  \text{Age}\ge\text{50}) \vee (\text{Education} \ne \text{Graduate}\wedge\text{Income}<\text{1500} )$'' 
	\end{example}
\section{Experiment}
\label{sec:experiment}

	 We have implemented a prototype implementation in Python to evaluate the performance of \IMLI
	 \footnote{ \url{https://github.com/meelgroup/mlic}}. The experiment has been  conducted on high performance computer
	 cluster, where each node consists of E$ 5$-$2690 $ v$ 3 $ CPU with $ 24 $ cores, $ 96 $GB of RAM, and in total  130,000 CPU hours. We have conducted an extensive set of experiments on publicly available benchmarks 
	 (detailed description in Appendix ~\ref{sec:dataset}) 
	 from UCI repository \cite{Dua:2017} to answer the following questions.
	 
	 \begin{enumerate}
		 \item How do the training time and accuracy of {\IMLI} compare to that of  state of the art classifiers including both interpretable and non-interpretable ones?
	     \item How do  accuracy, rule size, and  training time of {\IMLI} vary with data fidelity parameter $ \lambda $ and the number of partitions $ p $?
	     \item How interpretable are the rules generated by {\IMLI}? 

	 \end{enumerate}

	 In summary, the experimental results demonstrate that {\IMLI} can scale to large datasets involving tens of thousands of samples with hundreds of binary features. In contrast to {\MLIC}, {\IMLI} achieves up to three orders of magnitude improvement in training time without loss of accuracy and interpretability. 
	 {\IMLI} generates rules which are not only interpretable but also accurate compared to other classifiers, which often produce non-interpretable models for the sake of accuracy. 


%
	\subsection{Experiment Methodology:}
	To measure the performance gain over {\MLIC}, we measure the accuracy and training time of {\IMLI} vis-a-vis {\MLIC}. We also perform comparisons with another state of the art classifier RIPPER and other (mostly) non-interpretable classifiers such as random forest (RF), support vector classifier (SVC),  Nearest Neighbors classifier (NN), $ l_1 $-penalized Logistic Regression (LR). 
	
	The number of parameter values is comparable $ (10) $ for each technique. For RF and RIPPER, we use control based on the cutoff of the number of examples in the leaf node. For SVC, NN,  and LR we discretize the regularization parameter on a logarithmic grid. For both  $ \IMLI $ and $ \MLIC $, we have two choices of $ \lambda \in \{5,10\} $, three choices of $ k\in \{1,2,3\} $, and two choices of the type of rule as $ \{\text{CNF}, \text{DNF}\} $. For $ \IMLI $ we vary the number of partitions $ p $ for each dataset  such that each partition has at least eight samples and at most $ 512 $ samples. For all classifiers, we set the training time  cutoff to be $ 1000 $ seconds.

	We perform an assessment of test  accuracy on a holdout set and mean validation accuracy on a $ 10 $-fold cross-validation set (holdout set $ 10 \% $, validation set $ 9\% $, training set $ 81\% $).
	 We compute test accuracy and  mean  validation accuracy across the ten folds for each choice of the parameters for each technique, and report  test accuracy,   mean validation accuracy, and mean training time for a choice of the parameters which incurs the  best test accuracy. To remove the bias of a particular holdout set we perform ten repetitions with different holdout sets and present the mean statistics.   
	 

	
	For {\MLIC} and {\IMLI}, we experimented with different {MaxSAT} solvers and finally chose MaxHS \cite{davies2011solving} for {\MLIC} since {MaxSAT} queries generated by {\MLIC} timeout for all the solvers and MaxHS is the only solver to return the best found answer so far. In contrast, queries constructed by {\IMLI} are easier and the best runtime performance is obtained by using Open-WBO solver~\cite{martins2014open}.   

	{
		\begin{table*}[ht]
			\begin{center}
				\begin{tabular}{|l | r | r |r |r |r| r |r |r |r| }
					\hline
					
					\textbf{Dataset} & \textbf{Size} & \textbf{Features} & LR & NN & RF &    SVC  &RIPPER& $ \MLIC $ & {$ \IMLI $} \\
					
					\hline
					\multirow{3}{*}{Parkinsons} & \multirow{3}{*}{195} & \multirow{3}{*}{392} & $ 97.5$ & $ 90$ & $ 97.5$ &$ 87.5 $ & $ 85.00$ & $ 97.5 $ &  $ 95 $
					\\
					&&&  $[ 69.76 ]$  &  $[  82.68 ]$  &  $[  80.98 ]$ &  $[  83.5 ]$ &  $[  76.74 ]$   &  $[ 82.35]   $  & $ [79.41] $
					
					\\ 
					&&&   $( 0.22 )$ & $(  0.14 )$ &  $(  1.7)$ &  $(  0.18  )$&  $(  2.92 )$  &  $( 114.75  )$ &   $ (0.37) $\\
					\hline
					\multirow{3}{*}{Ionosphere } & \multirow{3}{*}{351} & \multirow{3}{*}{540}  &$ 93.06 $&$ 86.11 $&$ 95.83 $&$ 95.83 $&$ 93.06 $&$ 93.06$ & $ 91.67 $\\
					&& & $ [90.64] $&$ [87.18] $&$[92.77]  $&$ [90.65] $&$ [85.73] $& $ [91.94] $  & $ [85.48] $ \\
					&& & $ (0.32) $&$ (0.26) $&$ (1.72) $&$ (0.26) $& $ (3.3) $& $ (917.13) $ & $ (0.5
					) $\\
					\hline		
					
					\multirow{3}{*}{WDBC} & \multirow{3}{*}{569} & \multirow{3}{*}{540} & $ 98.28 $&	$ 95.69 $&	$ 96.55 $&$ 96.55 $&	$ 92.24 $		& $ 93.97 $	   & $ 89.66 $
					\\
					&&&  $ [96.77] $	&$ [97.27] $	&$ [96.68] $&$ [97.16] $
					&	$ [93.54] $	& $ [95.1] $ & $ [91.18] $
					\\ 
					& & & $ (0.33) $	&$ (0.45) $ & 	$ (1.8) $  & 	$ (0.27) $&	$ (3.53) $ & Timeout &  $ (0.78
					) $
					\\
					\hline
					\multirow{3}{*}{Blood} & \multirow{3}{*}{748} & \multirow{3}{*}{64} & $ 80 $& $76.67 $& $76 $& $76 $& $76
					$&  $ 75.33 $
					&$ 76 $\\
					&&& $ [75.92] $&$ [76.14] $& $[76.22] $& $[76.22] $& $[76.22]$ & $ [77.61] $ & $ [76.12] $
					\\
					&&& $(0.2 )$&$( 0.2 )$&$( 1.68)$& $ (0.18) $ &$( 2.23
					)$  & $ (5.96) $ &  $ (0.24) $
					\\
					\hline
					\multirow{3}{*}{PIMA} & \multirow{3}{*}{768} &
					\multirow{3}{*}{ 134} &$75.32 $ &$77.92$ &$76.62$ &$ 75.32$&$ 75.32
					$ &$ 75.97 $& $ 73.38 $\\
					&&& $[74.75]$ &$[73.23 ]$&$[75.54]$ &$[76.63 ]$&$ [74.36
					] $ &$ [71.74] $&  $ [68.12] $\\
					&& &$(0.3 )$&$( 0.32 )$&$( 1.99 )$ &$( 0.37 )$&$ (2.58
					) $ &Timeout&  $ (0.74
					) $\\
					\hline 
					\multirow{3}{*}{Tom's HW } & \multirow{3}{*}{28179} &
					\multirow{3}{*}{  844 } &$96.98 $ &$ 94.11  $&$97.11 $  &$ 96.83 $ &$ 96.75
					$& $ 96.61 $& $96.86 $\\
					&& &$[97.12]$ &$[ 93.91 ]$&$[97.35]$& $[ 97.1]$&$ [97.12] $ &$ [96.55] $& $ [96.23] $\\
					&& &$(2.24)$ &$(910.36)$ &$(27.11)$ &$(354.15)$&$ (37.81)
					$&Timeout&  $ (23.67) $\\
					\hline
					\multirow{3}{*}{Adult} & \multirow{3}{*}{32561} &
					\multirow{3}{*}{262} & $84.58
					$ &$ 83.46 $&$84.31$ & $ 84.39$&$ 83.72
					$ &$ 79.72 $& $ 80.84 $\\
					&&& $[84.99
					]$ &$[ 83.62]$ &$[84.68]$ &$[ 84.69]$&$ [83.49
					] $ &$ [79.53] $& $ [77.43] $\\
					&&& $(5.8
					)$&$( 640.81)$&$(36.64)$ &$(918.26)$&$ (37.66) $ &Timeout& $ (25.07
					) $\\
					\hline
					
					\multirow{3}{*}{Credit-default} & \multirow{3}{*}{30000} &
					\multirow{3}{*}{ 334  } &$80.81
					$&$ 79.61 $&$80.87$ & $80.69$&$ 80.97
					$& $ 80.72 $ & $ 79.41 $\\
					&& &$[82.16
					]$&$[ 80.2 ]$&$[82.13 ]$&$[82.1]$& $ [82.07] $ & $ [82.14] $ &  $ [75.81] $\\
					&& &$(6.87
					)$ &$(872.97 )$ &$(37.72 )$ & $(847.93  )$&$ (20.37
					) $& Timeout  & $ (32.58) $\\
					\hline
					\multirow{3}{*}{Twitter} & \multirow{3}{*}{49999 } &
					\multirow{3}{*}{ 1050 } &$95.67
					$ & \multirow{3}{*}{Timeout}  &$95.16 $&\multirow{3}{*}{Timeout}&$ 95.56
					$  & $ 94.78 $& $ 94.69 $ \\
					&& &$[96.32
					]$ & &$[96.46]$ &&$ [96.16
					] $  & $ [95.69] $ &  $ [95.08] $\\
					&& &$(3.99
					)$&&$(67.83  )$&&$ (98.21
					) $ & Timeout&$ (59.67	) $\\
					\hline

				\end{tabular}
			\end{center}
			\caption{Comparisons of classification accuracy with $ 10 $-fold cross validation for different classifiers. For every cell in the last {seven} columns the top value represents the test accuracy $ (\%) $ on unseen data, the middle value surrounded by square bracket represents average validation accuracy $ (\%) $ of $ 10 $-fold, and the bottom value surrounded by parenthesis represents the average training time in seconds. }
			\label{tab:results}
		\end{table*}
	}

	 	\begin{table}[ht]
	 		\vspace{-20pt}
	 		\begin{center}
	 			\setlength\tabcolsep{4.3pt}
	 			\begin{tabular}{|l|   r | r| r|}
	 				\hline
	 				Dataset& {RIPPER} & $ \MLIC $& $ \IMLI $ \\

					\hline
					Parkinsons   & $ 2.6 $ & $2$& $ 8 $\\
					\hline 
					Ionosphere   & $ 9.6 $& $ 13 $& $ 5 $ \\
					\hline 
					WDBC  & $ 7.6 $&  $ 14.5
					$  & $ 2 $ \\
					\hline
					Blood  &$ 1 $ & $ 3 $ & $ 3.5 $ \\
					\hline
					Adult    & $ 107.55$ & $ 44.5 $  & $ 28 $\\
					\hline 
					PIMA  & $ 8.25 $ & $ 16
					$ & $ 3.5 $ \\
					\hline 
					Tom's HW  & $ 30.33 $ & $ 2
					$	& $ 2.5 $ \\
					\hline
					Twitter   & $ 21.6 $ & $ 20.5
					$ & $ 6 $\\
					\hline
					Credit & $ 14.25 $ & $ 6
					$ & $ 3 $\\
					\hline
									
	 			\end{tabular}
	 		\end{center}
	 		\caption{Size of the rule of interpretable classifiers.}
	 		\label{tab:rule_size}
	 	\end{table}
	 \subsection{Results}
	 \subsubsection{Comparison Among Different Classifiers:}
		Table \ref{tab:results} presents the comparison of $ \IMLI $ vis-a-vis typical interpretable and non-interpretable classifiers. The first three columns list the name, size (number of samples), and the number of  binary features (discretized) for each dataset. The next seven columns  present test accuracy, validation accuracy, and training time of  the classifiers.

		In  Table \ref{tab:results} we observe that $ \MLIC $ and RIPPER have slightly higher accuracy than $ \IMLI $.  Specifically  considering all  datasets  $ \MLIC $ (resp. RIPPER) has on average $ 1.12 \% $ (resp. $ 0.12 \% $) higher test accuracy and $ 3.09 \% $ (resp. $ 2.29 \% $) higher validation accuracy than that of $ \IMLI $. In contrast, {\IMLI} takes up to three order of magnitude less training time compared to  $ \MLIC $ and upto one order of magnitude less time  compared to  RIPPER. Interestingly, {\IMLI} is competitive to black-box classifiers, e.g.  SVC and NN for large datasets. In this context, we think {\IMLI} achieves a sweet spot in achieving significant runtime improvement in training without losing accuracy. 
		
		At this point, one may wonder as to whether minor loss in accuracy also leads to loss of interpretability. To this end, we illustrate a detailed comparison among the generated rules of  $ \IMLI $, RIPPER, and $ \MLIC $ in Table \ref{tab:rule_size}. We observe that rule size of {\IMLI} is significantly smaller than that of RIPPER and {\MLIC}. In particular, note that {\IMLI} can generate rules with size less than eight for all the datasets (exception in Adult dataset where {\IMLI} still has the most sparse rule), thereby demonstrating the sparsity of generated rules. In contrast, {\MLIC} and RIPPER generate rules of significantly larger size than {\IMLI}. As indicated earlier, sparsity is only one of several possible approaches to quantify interpretability. Therefore, we also decided to observe the generated rules and interestingly, the generated rules seem very intuitive. 
		We have listed the generated rules in Appendix~\ref{sec:rules}

\begin{figure}
	\centering
	\subfloat
	[$ k \in \{1,2\}, p=4 $]{\includegraphics[width=0.23\textwidth]{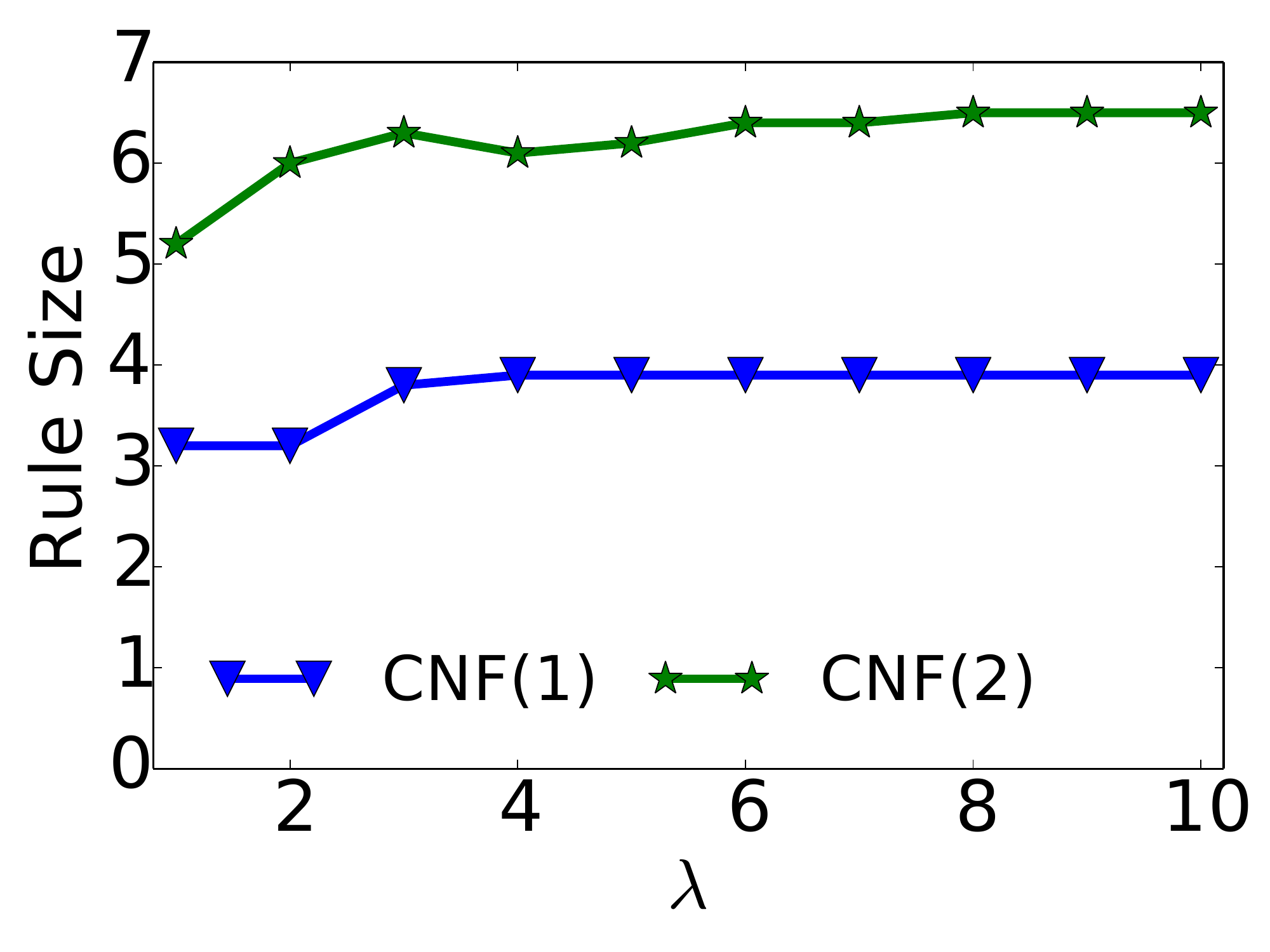}\label{fig:lambda(a)}} \hfill
	\subfloat[$ k \in \{1,2\}, p=4 $]{\includegraphics[width=0.23\textwidth]{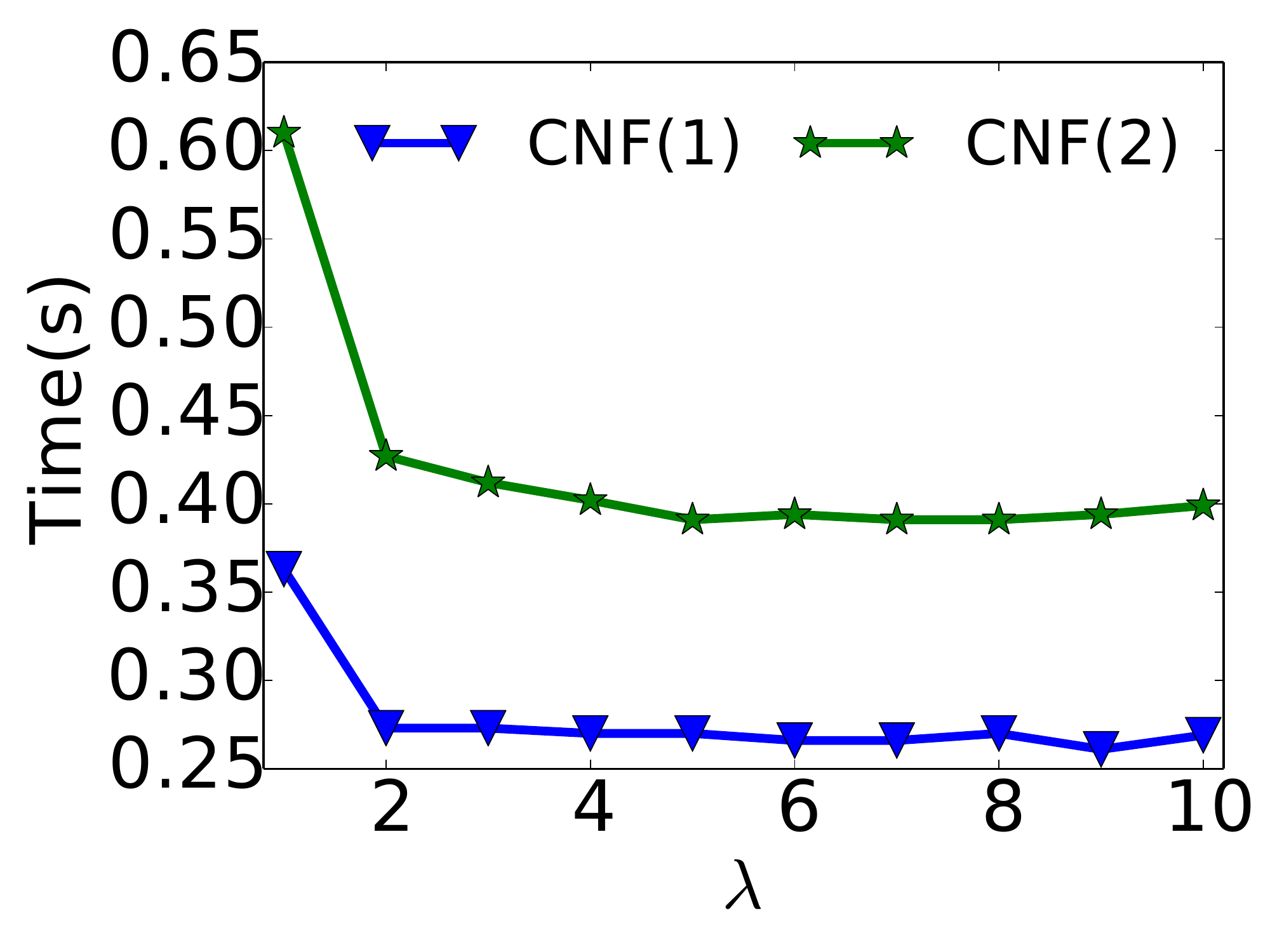}\label{fig:lambda(b)}} \hfill
	\caption{Effect of data fidelity parameter $\lambda$ on rule size and  training time. The number within parenthesis denotes the number of clause  $ k $ for the respective rule. }
	\label{fig:lambda}
\end{figure}
\subsubsection{Varying Data Fidelity $ \lambda $:}
In Figure \ref{fig:lambda} we present the result for varying  $ \lambda $. 
Our experiment result finds a similar observation in all the datasets, and here we present result for Parkinsons dataset.


Recall that size of a rule is the total number of literals  appearing in $ \Rule $.
As we increase the value of $ \lambda $, rule size (Figure \ref{fig:lambda(a)}) and the time taken to solve the MaxSAT query (Figure \ref{fig:lambda(b)}) decreases. When $ \lambda=1 $, all the soft clauses have equal weight.
  However, when   $ \lambda $ is higher, soft clause $ N_q $ is put a higher weight than $ V_j^l $, which turns out in finding the solution of the query requiring less time because of the priority among soft clauses. Therefore, the generated rule becomes sparser.  We find a similar trend for  DNF rules too. 
  In empirical study we find that
  as we increase $ \lambda $, training accuracy increases gradually but validation  accuracy and test accuracy do not follow a monotonic behavior in the partition-based learning.

\begin{figure}
	\centering
	\vspace{-8pt}
	\subfloat[]{\includegraphics[width=0.23\textwidth]{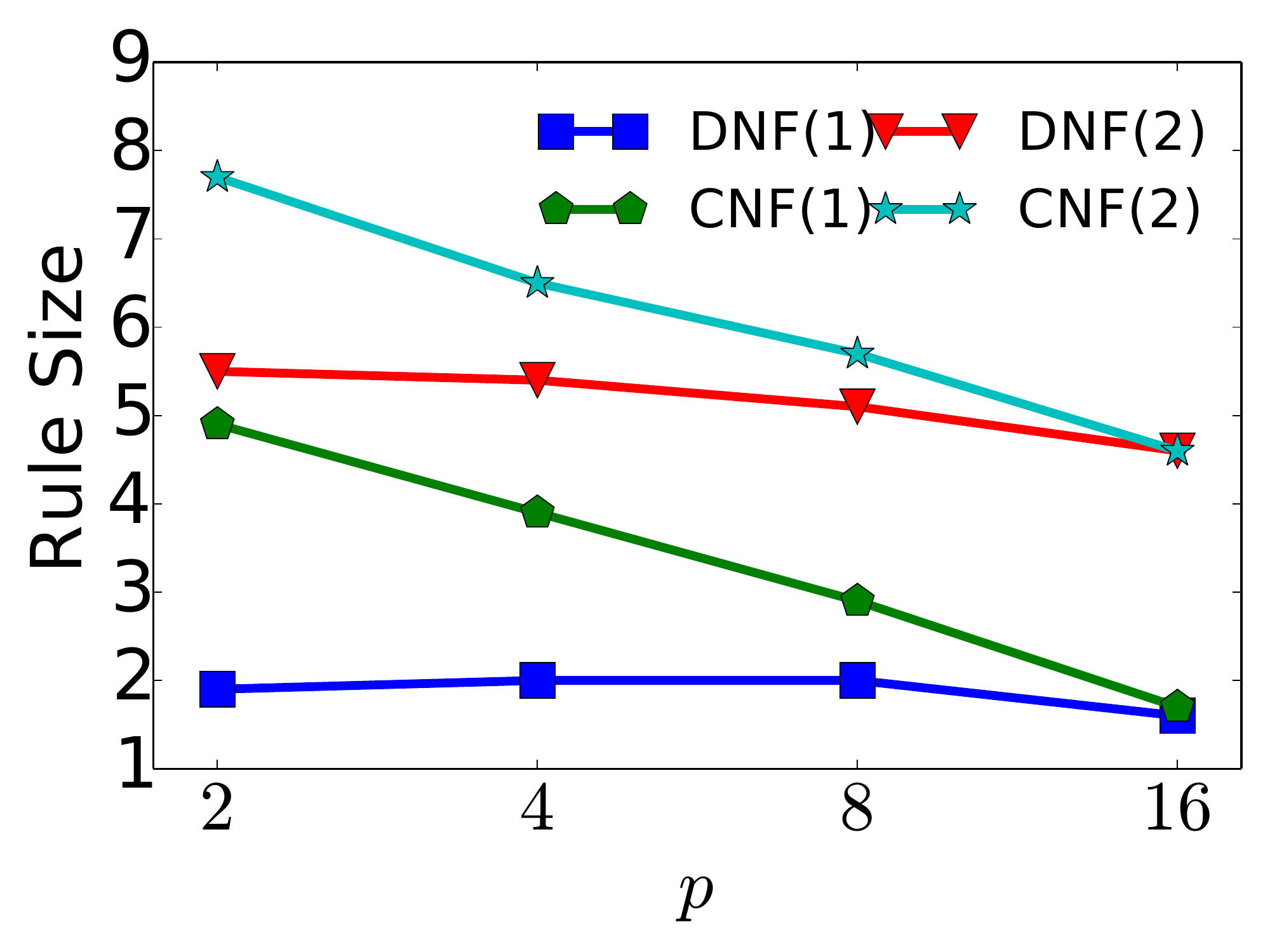}\label{fig:partition(a)}} \hfill
	\subfloat[]{\includegraphics[width=0.23\textwidth]{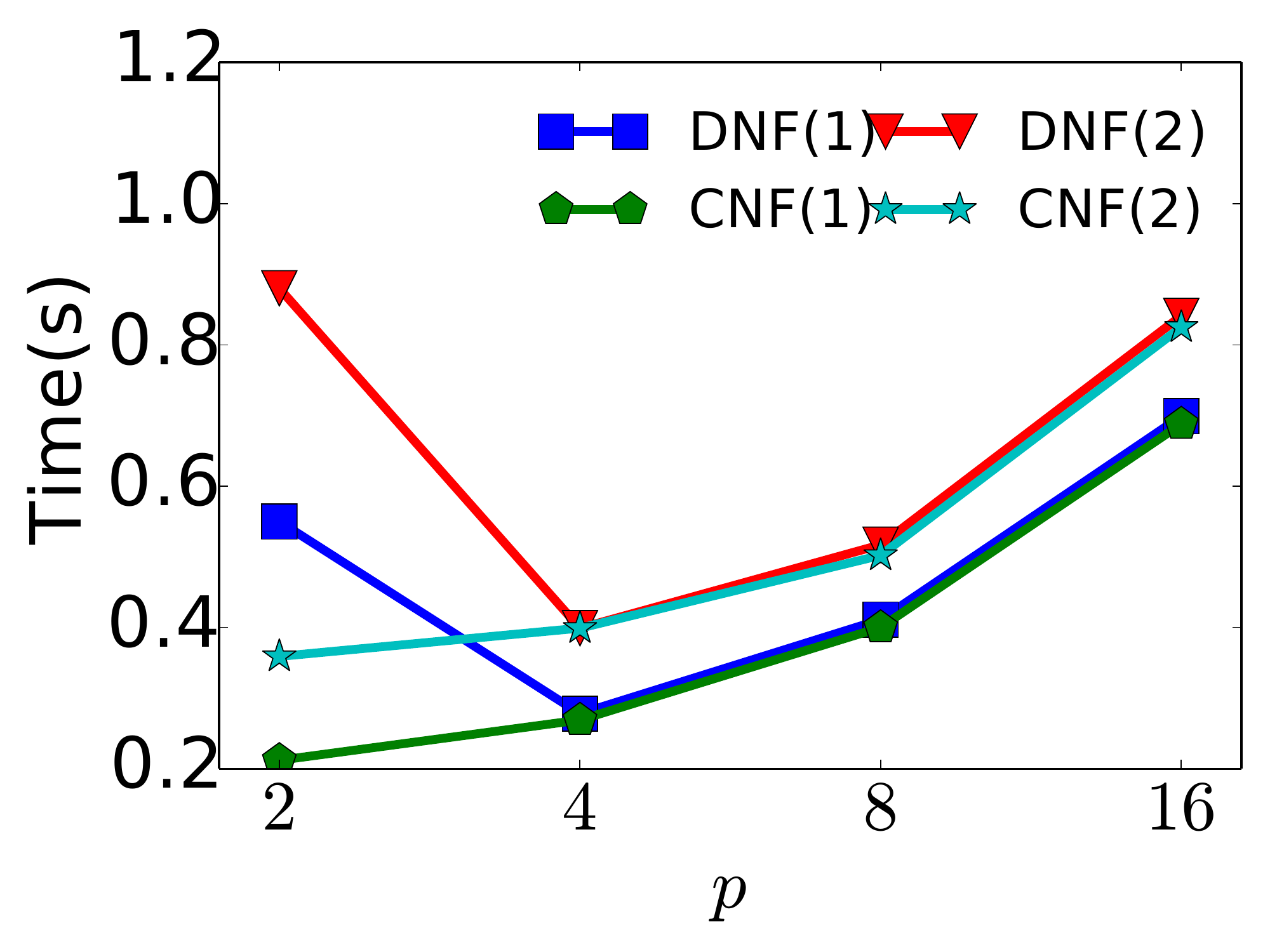}\label{fig:partition(b)}} \hfill
	\vspace{-12pt}
	\subfloat[]{\includegraphics[width=0.23\textwidth]{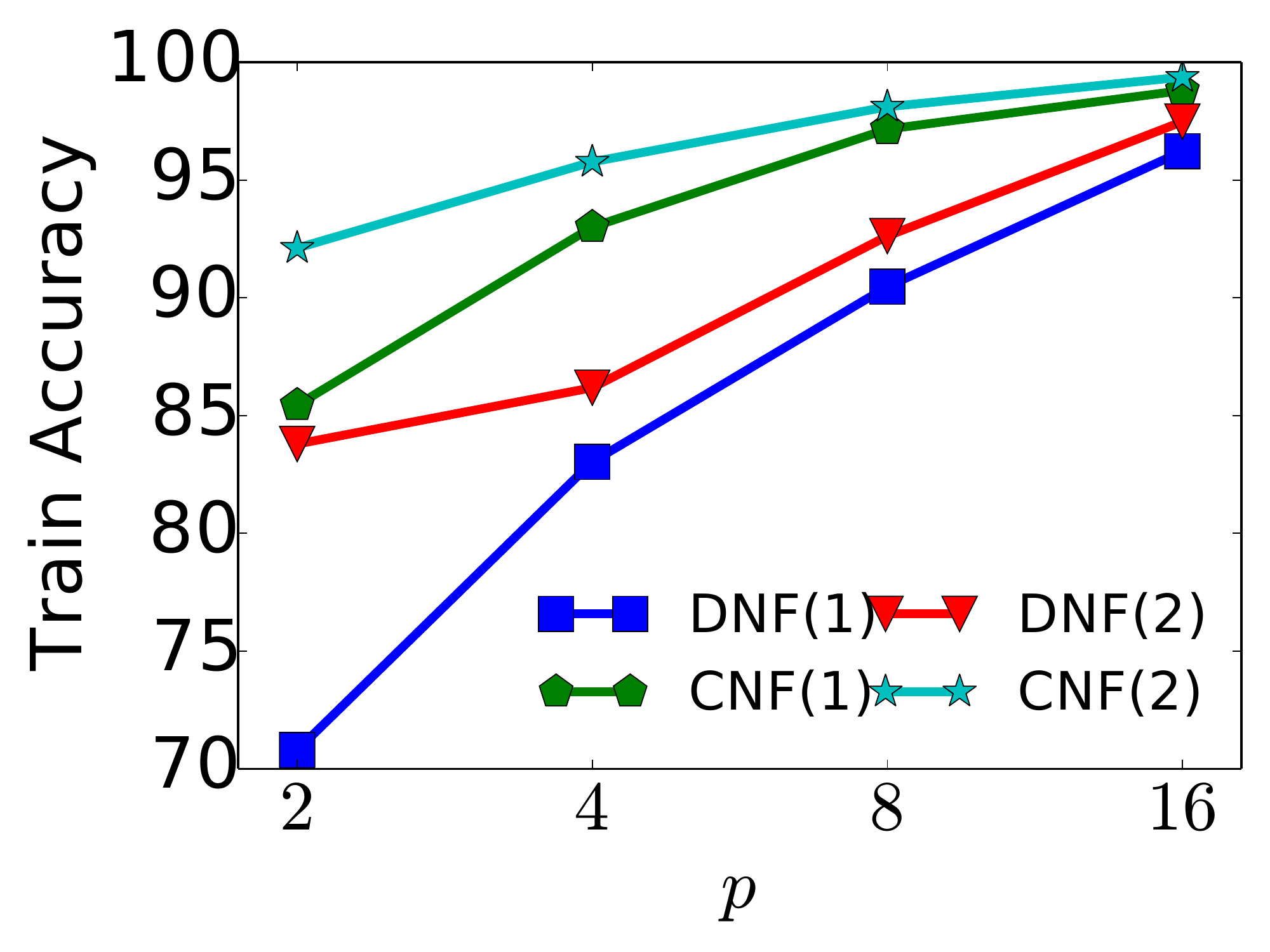}\label{fig:partition(c)}} \hfill
	\subfloat[]{\includegraphics[width=0.23\textwidth]{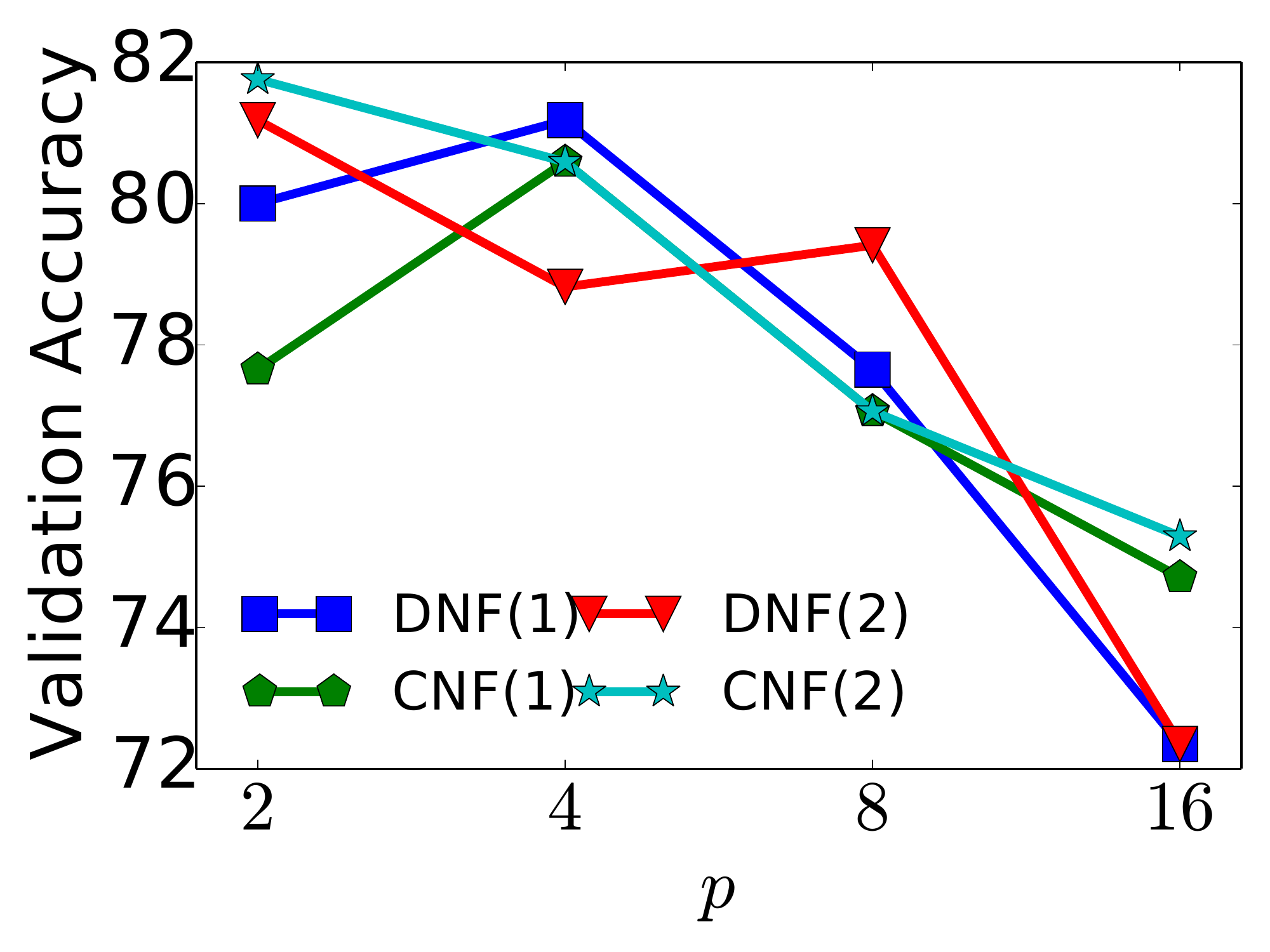}\label{fig:partition(d)}} \hfill
	\vspace{-8pt}
	\caption{Effect of the number of partitions $ p $ on rule size,  training time, train accuracy, and validation accuracy $ (k \in \{1,2\}, \lambda=10) $. The number within parenthesis denotes the number of clause $ k $ for the respective rule. 
	 }
 \vspace{-16pt}
	\label{fig:partition}
\end{figure}
\subsubsection{Varying The Number of Training Partitions $ p $:}
Figure ~\ref{fig:partition} presents the effect on rule size, training time, train accuracy, and validation accuracy as we vary $ p $. 
For all the datasets we find a similar observation  and here present the result for Parkinsons dataset for ease of exposition.  

In Figure \ref{fig:partition(a)} we observe that the size of the rule decreases as  $ p $ increases. This observation can be attributed to the decrease in the number of training samples per partition with the increase in the number of partitions and consequently, smaller rules suffice. 
In Figure \ref{fig:partition(b)}  $ \IMLI $ empirically shows that the  training time at first  decreases significantly and then increases slowly with the increase in $ p $. This observation can be attributed to the combined effect of the number of queries and the size of queries. Initially, we achieve a significant reduction in the size of query while the number of queries eventually dominate the runtime.  



 
  In Figure \ref{fig:partition(c)} we observe that $ \IMLI $ tends to make less training error as $ p $ goes higher because $ \IMLI $ learns on fewer  samples with fixed $ \lambda $ value. Moreover, we observe that CNF rules have higher train accuracy than  DNF rules, and $ 2 $-clause rules have higher train accuracy than $ 1 $-clause rules for both CNF and DNF rules.
  
  
   In Figure \ref{fig:partition(d)} we notice a  decrease in validation accuracy as we allow more partitions because learning on fewer samples  results in a rule that has less predictive power on validation set. For small $ p $, $ \IMLI $, however, ensures higher validation accuracy if the rule has more clauses.
   Moreover, effect of the number of partitions on test accuracy computed on unseen data does not follow any  specific pattern. 
   
    
    In summary, we observe that the number of partitions gives a sound handle to the end user to tradeoff the training time, validation accuracy, and interpretability of the rules.

\section{Conclusion}
\label{sec:conclusion}
In this paper, we present IMLI: an incremental framework for MaxSAT-based learning of interpretable classification rules. Extensive experiments on UCI datasets demonstrate that {\IMLI} achieves up to three orders of magnitude improvement in training time with only a minor loss of accuracy. We think {\IMLI} highlights the promise of MaxSAT-based approach and opens up several interesting directions of future research at the intersection of AI and SAT/SMT community. In particular, it would be an interesting direction of future research if the MaxSAT solvers can be designed to take advantage of incrementality of {\IMLI}.

\section*{Acknowledgments}
This work was supported in part by NUS ODPRT Grant R-$ 252 $-$ 000 $-$ 685 $-$ 133 $  and AI Singapore Grant R-$ 252 $-$ 000 $-A$ 16 $-$ 490 $. The computational work for this article was performed on resources of the National Supercomputing Centre, Singapore \url{https://www.nscc.sg}.

\clearpage

{
\bibliographystyle{aaai}
\bibliography{main} 

\appendix
\section{Related Work}
\label{sec:related}


The study of designing interpretable machine learning classifiers can find its root in the development of  popular models such as  decision trees \cite{bessiere2009minimising,quinlan2014c4}, decision lists \cite{rivest1987learning}, classification rules \cite{cohen1995fast} etc.  Apart from designing models with the purpose of generating interpretable rules, various studies have been conducted in order to  improve  the efficiency and scalability of the model. Specifically,  decision rule approaches such as  C4.5 rules \cite{quinlan2014}, CN2\cite{ClarkN1989}, RIPPER \cite{cohen1995fast}, SLIPPER  \cite{CohenS1999} rely on  heuristics, branch pruning, ad-hoc local criteria e.g., maximizing information gain, coverage, etc. because these models consider an intractable combinatorial optimization function.

In recent work,  
Malioutov et al.   has proposed rule based classification system by borrowing ideas from Boolean compressed sensing \cite{malioutov2013exact}. Two-level Boolean rules \cite{su2015interpretable} has been proposed to trade classification accuracy and interpretability, where hamming loss is used to characterize accuracy and sparsity is used  to characterize interpretability.
Wang et al. \cite{wang2015falling} has proposed a Bayesian framework for learning falling rule lists which is an ordered list of if-then rules. Chen et al. designs an optimization approach to learning falling rule lists and ``softly'' falling rule lists, along with Monte-Carlo search algorithms that use bounds on the optimal solution to prune the search space.

Incremental learning techniques are one possible solution to the scalability problem, where data is processed in parts, and the result combined so as to use less memory \cite{syed1999incremental}. 
 Incremental framework   has been studied in  SVM \cite{ruping2001incremental} to improve the existing approach. Specifically, an on-line recursive algorithm for SVM has been studied to facilitate  learning one vector at a time \cite{cauwenberghs2001incremental} and a local incremental approach has been proposed \cite{ralaivola2001incremental} to learn a SVM based on Radial Basis Function Kernel.

	\section{Examples}
	\subsection{Illustration of Incremental Learning}
	\label{sec:example}
	We illustrate  an interpretable rule generated by $ \IMLI $ with step by step formulation over partitions on iris dataset\footnote{\url{https://archive.ics.uci.edu/ml/datasets/iris}}. Iris dataset has four attributes: sepal length, sepal width, petal length, and petal width. All feature values are scaled in centimeter. Iris dataset has three classes: Iris Setosa, Iris Versicolour, and Iris Virginica. We consider the binary problem of classifying Iris Versicolour  from the other two species: Setosa and Verginica. Here we consider that $ \R $ is a single clause  DNF rule and learned over four partitions (e.g. $ \R_1, \dots, \R_4 $). The final rule $ \R $  is equivalent to $ \R_4 $. 
	

\begin{equation*}
\begin{split}
&\R_1:=\text{petal length} \le 5.32 \,\wedge \, \text{petal length} > 1.7 \,\wedge \, \\
&\qquad\;\;\;\text{petal width} \le 1.8\\
&\R_2:=\text{sepal width} \le 3.1 \,\wedge \, \text{petal length} \le 5.32 \,\wedge \,\\ &\qquad\;\;\;\text{petal length} > 1.7 \,\wedge \, \text{petal width} \le 1.5\\
&\R_3:=\text{sepal width} \le 3.1 \,\wedge \, \text{petal length} \le 5.0 \,\wedge \,\\ &\qquad\;\;\;\text{petal length} > 1.7 \,\wedge \, \text{petal width} \le 1.5\\
& \R:=\text{sepal width} \le 3.1 \,\wedge \, \text{petal length} \le 5.0 \,\wedge \,\\ &\qquad\;\;\;\;\;\;\;\text{petal length} > 1.7 \,\wedge \, \text{petal width} \le 1.8
\end{split}
\end{equation*}
$ \R $ can be interpreted as: 
	a sample which satisfies all of the four constraints is predicted as Iris Versicolour. Here the  rule size $ | \R|=4 $. Specifically, $ \R_1 $ is learned on the first partition of training data. $ \R_2 $ learns two literals ($ 2^{nd} $ and $ 3^{rd} $) which also appear in $ \R_1 $, introduces two new literals ($ 1^{st} $ and $ 4^{th} $) while learning on the second training partition, and falsifies a previously learned literal  from $ \R_1 $ (i.e., $ (\text{petal width} \le 1.8 $). 
	
	Since the dataset contains continuous valued features, $ \IMLI $ removes redundant literals at each step of learning by applying  Algorithm \ref{alg:imli-RRL}. For example, we show the  generated rule $ \R_3' $  for the $ 3^{rd} $ partition if we do not apply Algorithm \ref{alg:imli-RRL}.

%
%

\begin{equation*}
\begin{split}
&\R_3':=\text{sepal width} \le 3.1 \,\wedge\, \text{petal length} \le 5.0 \,\wedge\, \\
&\qquad\;\;\;\underline{\text{petal length} \le 5.32} \,\wedge\, \text{petal length} > 1.7 \,\wedge\, \\
&\qquad\;\;\;\text{petal width} \le 1.5 \,\wedge\, \underline{\text{petal width} \le 1.8}\\
\end{split}
\end{equation*}

	The underline marked literals are redundant, hence removed in $\R_3 $.

%
%
%
	
%
	\section{Proof}
	
	\begin{proof} (Lemma \ref{lm:adjustment_of_rule_size})
		\label{prf:prune}
		The case  $ |\R'|=|R| $ is trivial because no literal is removed from $ \R' $, and $ \R' $ is equivalent to  $ \R $.

		When $ |\R'| < |\R| $, $ \exists \langle b_i^l,b_j^l \rangle $  where $  l \in [1,k] $,    $siblings(b_i^l,b_j^l)=true $, $ tval(b_i^l) < tval(b_j^l) $, and $ x_c $ is the considered continuous feature. 
		
		Suppose  $op(b_i^l)=op(b_j^l)=\text{``}\ge \text{''} $. Consider two sets of real number: $ S_i= \{x_i: x_i \ge  tval(b_i^l)\} $ and $ S_j=\{x_j: x_j \ge  tval(b_j^l)\} $. As $ \R $ is in CNF,  $ \R $ checks $ I[x_c \in S_i \cup S_j] $  to classify  $ \forall q,\mathbf{X}_q  $. Here $ S_i \cup S_j = S_i $, so $ b_j^l $ can be pruned.
		
		We can prove similarly when $op(b_i^l)=op(b_j^l)=\text{``}< \text{''} $.
		
		Therefore, $ |\R'| \le |\R| $ and $ \R' $ is equivalent to $ \R $.
		
	\end{proof}
	
\section{Experiment}
\subsection{Dataset Description}
\label{sec:dataset}
We use nine publicly available datasets  of various size from  UCI repository  for conducting experiments for {\IMLI}. The datasets contain both real and categorical valued features. The datasets are  buzz events from two different social networks: Twitter and Tom's HW (Tom), Adult data (Adult), Parkinson's Disease detection dataset (Parkinsons), Ionosphere (Ion), Pima Indians Diabetes (PIMA), Blood service centers (Blood), breast cancer Wisconsin diagnostic (WDBC), and  Credit-default approval dataset (Credit-default).

	\section{Interpretable Rules}
	\label{sec:rules}
	In this section we are presenting the rules generated by $ \IMLI $ for the datasets we use in experiment. 
	
	\subsection{Rule for Credit Default Dataset:}
	A client will default if $ := $\\
	 ({education}$  =  ${others} OR  \\
	 {repayment status September: payment delay} $ > 1 $  {month} OR  \\{repayment status August: payment delay} $ > 2 $ {months} OR  \\{repayment status June: payment delay} $ > 2 $  { months})

	 \subsection{Rule for Adult Dataset}
	 
	 A person's income is greater than $ 50  $k if $ := $\\
	 (workclass is not   Federal-gov AND workclass is not   State-gov AND education is not   11th AND education is not   5th-6th AND education is not   7th-8th AND education-num $>$ 10.0 AND marital-status is not   Divorced AND marital-status is not   Married-AF-spouse AND marital-status is not   Married-spouse-absent AND marital-status is not   Never-married AND marital-status is not   Separated AND occupation is not   Handlers-cleaners AND occupation is not   Machine-op-inspct AND occupation is not   Priv-house-serv AND occupation is not   Protective-serv AND relationship is not   Own-child AND relationship is not   Unmarried AND native-country is not   Cambodia AND native-country is not   Columbia AND native-country is not   Dominican-Republic AND native-country is not   Guatemala AND native-country is not   Hungary AND native-country is not   Jamaica AND native-country is not   Laos AND native-country is not   Mexico AND native-country is not   Outlying-US,Guam-USVI-etc AND native-country is not   Poland AND native-country is not   Vietnam)

	\subsection{Rule for WDBC Dataset}
	
	Tumor is diagnosed as malignant if $ := $\\
	(standard area of tumor $ > 38.43  $ OR   \\largest perimeter of tumor  $ > 115.9 $  OR   \\largest number of concave points of tumor $ > 0.1508 $)
		
	\subsection{Rule for Blood Transfusion Service Center Dataset}

	He/she will donate blood if $ := $\\
	(Months since last donation $ \le 4 $ AND \\total number of donations $ > 3 $ AND \\total donated blood  $ \le 750.0 $ c.c. AND \\months since first donation $ \le 45 $)

	\subsection{Rule for Pima Indians Diabetes Database}
	
	Tested positive for diabetes if $ := $ \\
	(Plasma glucose concentration $ > 125 $ AND \\Triceps skin fold thickness $ \le 35 $ mm AND \\Diabetes pedigree function $ > 0.259 $ AND \\Age $ > 25 $ years) 
	
	\subsection{Rule for Parkinson's Disease Dataset}
	
	A person has Parkinson's disease if $ := $ \\	
	(minimum vocal fundamental frequency $ \le $ $ 87.57 $ Hz OR minimum vocal fundamental frequency $ > $ $ 121.38 $ Hz OR Shimmer:APQ3 $ \le $ $ 0.01  $ OR MDVP:APQ $ > $ $ 0.02 $ OR  D2 $ \le $ $ 1.93 $  OR NHR $ > $ $ 0.01  $ OR HNR $ > $ $ 26.5 $ OR spread2
	$ > $ $ 0.3 $) \\
	AND \\
	(Maximum vocal fundamental frequency $ \le $ $ 200.41 $ Hz OR HNR 
	$ \le $ $ 18.8 $ OR spread2 $ > $ $ 0.18 $ OR D2 $ > $ $ 2.92 $)
	
	\subsection{Rule for Ionosphere Dataset}

	  A radar is ``Good'' if$ := $ \\ 
	  (x$ 1 $ $ =1 \; $ AND $  $ x$ 2 $ $ > 0 \; $ AND $  $ x$ 4 $ $ > 0\;  $ AND $  $ x$ 5 $ $ > -0.23$ )
	
	 Here ``x'' represents the set of columns of the dataset.	
	
	\subsection{Rule for Tom's Hardware Dataset} 
	A topic is popular if $ := $ \\
	(Number of displays at time $ 2\; > 1936 \; $ OR $ $ Number of displays at time $ 7\; > 1250.6 $)

	\subsection{Rule for Twitter Dataset}
	
	A topic is popular if $ := $ \\
	(Number of Created Discussions at time $ 1 > \;78\; $ OR $ $ \\ Attention Level measured with number of authors  at time $6\; > 0.000365 \; $ OR $ $ Attention Level measured with number of contributions at time $ 0 \; > 0.00014 \; $ OR $ $ Attention Level measured with number of contributions at time $ 1 > 0.000136 \; $ OR $ $ Number of Authors at time $ 0 > 147 \; $ OR $ $ Average Discussions Length at time $3  > 205.4 \; $ OR $ $ Average Discussions Length at time $ 5 > 654.0$)

}
\end{document}